\DeclareMathOperator*{\argmin}{arg\,min}
\DeclarePairedDelimiterX{\inp}[2]{\langle}{\rangle}{#1, #2}
    \def\ddefloop#1{\ifx\ddefloop#1\else\ddef{#1}\expandafter\ddefloop\fi}
    \def\ddef#1{\expandafter\def\csname c#1\endcsname{\ensuremath{\mathcal{#1}}}}
    \def\ddef#1{\expandafter\def\csname s#1\endcsname{\ensuremath{\mathsf{#1}}}}
    \def\ddef#1{\expandafter\def\csname b#1\endcsname{\ensuremath{\mathbb{#1}}}}
\newtheorem{thm}{Theorem}
\newtheorem{lem}[thm]{Lemma}
\newtheorem{prop}[thm]{Proposition}
\newtheorem{claim}[thm]{Claim}
\newtheorem{coro}[thm]{Corollary}
\newtheorem{iden}{Identity}
\def\bbDelta{\boldsymbol{\Delta}} 
\title[Learning with Average Dependent Costs.]{ Online Decision Making with History-Average Dependent Costs (Extended)} 
\author{%
\Name{Vijeth Hebbar} \Email{vhebbar2@illinois.edu} \AND \Name{C\'edric Langbort} \Email{langbort@illinois.edu}\\
 \addr Coordinated Science Lab, University
of Illinois at Urbana–Champaign, Urbana, IL 61801, USA.%
}
\begin{document}

\maketitle

\begin{abstract}
In many online sequential decision-making scenarios, a learner's choices affect not just their current costs but also the future ones. In this work, we look at one particular case of such a situation where the costs depend on the time average of past decisions over a history horizon. We first recast this problem with history dependent costs as a problem of decision making under stage-wise constraints. To tackle this, we then propose the novel Follow-The-Adaptively-Regularized-Leader (FTARL) algorithm. Our innovative algorithm incorporates \emph{adaptive regularizers} that depend explicitly on past decisions, allowing us to enforce stage-wise constraints while simultaneously enabling us to establish tight regret bounds. We also discuss the implications of the length of history horizon on design of no-regret algorithms for our problem and present impossibility results when it is the full learning horizon.  


\end{abstract}

\begin{keywords}%
  Sequential Decision Making, Online Optimization with Memory, Online Learning with Constraints. %
\end{keywords}

\section{Introduction} \label{sec:intro}

In the classical online optimization framework, one seeks to study a multi-stage decision making process where a learner faces a series of cost functions $\{l^t\}_{t=1}^T$ over a time horizon $T$. The learner has no prior knowledge about the sequence of cost functions they will face but has to make decisions $x^t$ at stage $t$ using only the information about the cost functions faced in past stages. They then incur a loss $l^t(x^t)$ for their decision. The goal of the learner is to make decisions that result in low \emph{regret},  viz. the difference between the learner's cumulative cost and the cost of the best-in-hindsight decision. This setting has been widely studied and has been successfully applied in domains ranging from portfolio management [\cite{cover1991universal, blum1997universal}] and auctioning [\cite{bar2002incentive}] in economics to network routing [\cite{awerbuch2008online}] and control [\cite{abbasi2011adaptive}] in engineering. Readers are referred to \cite{hazan2022introduction} and \cite{cesa2006prediction} for an extensive survey. 


Deviating from this traditional setup, we seek to study the problem where the cost depends not only on the current action but also on the past actions. While such history dependence may in general show up in any number of ways, we restrict ourselves to the special case where costs (or payoffs) depend solely on the \emph{time average} of \emph{finitely} many past actions. This is motivated, for example, by the following scenarios. 

Consider a manufacturing facility that selects a product from its range to produce at each time step, with the aim of producing the most profitable item. For ease of exposition, let us suppose that the facility produces one unit of this chosen good per time step. Now we make the assumption that the manufacturer must `commit' to produce the chosen good for a fixed term of $H$ steps. Such `commitment' can encapsulate a myriad of scenarios where decisions have long-term effect, for example, in training workers for a specific production line, who are then employed on an $H$-period contract to produce that good. Regardless of the specific way in which this commitment manifests itself, it implies that the production initiated at a time step continues for the next $H$ time steps (in parallel with the production initiated in these later steps). 

Therefore, the proportion of each product in the total production at any time is simply the time-average of the production initiated in the past $H$ time steps. The manufacturer's reward -- quantified as profit per unit produced -- in a time step is tied to this average result of their past decisions. While the manufacturer may not know what profits they will enjoy for each product when initiating production, they can learn from historical data, and so the question arises

\textbf{Q1:} \emph{How would a learning agent make decisions when the profit they receive depends on the long-term average of their choices?}

Let us now look at another scenario that provides an alternate view on the problem of learning with history average dependent payoffs. Consider a media outlet generating content with a goal of achieving high viewership, while not knowing a priori what content is most desirable to consumers. Any realistic consumer with a bounded memory will base their viewership decision not just on the current content, but on the finite history of outputs by the media outlet. Indeed, a media house known primarily for coverage of financial news announcing they will feature an hour long interview with the new World Chess Champion will not attract as many viewers as a sports news channel doing the same. Thus, the media outlet's reward -- in this case their viewership -- depends on their \textit{reputation} i.e. the time-average of their past behaviour. 

Viewing \emph{reputation} as an averaged \emph{state} induced by the sequence of actions taken by the learner, the learner can be seen as trying to learn what \emph{reputation} to establish to enjoy high reward. In this view, the decision of the learner at a stage is constrained by their decision in past stages. Indeed the reputation of the media house in the eyes of a consumer with memory cannot be arbitrarily changed within short time frames. So, we ask the question

\textbf{Q2:} \emph{How would a learning agent make decisions when the decision across stages are coupled by constraints?}

These examples underline the dual aspects of historical influence: the enduring impact of past decisions versus the limitations imposed by past decisions on present choices. This duality in viewpoint is precisely captured in the two equivalent questions \textbf{Q1} and \textbf{Q2} posed above.
Finally, another natural question that arises from the consideration of the media outlet illustration above is how the length of the consumer's memory affects the media outlet's ability to learn about them. Simply put, if consumers remember everything, the media outlet might never overcome a `bad' reputation caused by its earlier decisions. So we then ask 

\textbf{Q3:} \emph{How should the length of the past horizon that affects current loss scale with $T$ to allow the learner to perform well?}


\subsection{Our Contributions and Paper Roadmap}

In this paper, we recast the problem of online sequential decision making with history-dependent payoffs as a problem of online optimization with stage-wise constraints. Specifically, in our work the dependence on history emerges solely through averaging and the cost in a stage relies explicitly on the average of past decisions. We also restrict ourselves to the class of online optimization problems with linear stage costs and decisions being picked from a finite-dimensional simplex. In other words, we consider the widely studied setup of \emph{Prediction-from-Expert-Advice} with the caveat that costs depend on the average of past decisions rather than the current decision alone. Section \ref{sec:setup} details the formalization of our framework. 

Drawing from the well-known Follow-The-Regularized-Leader (FTRL) [\cite{shalev2007primal}] class of algorithms, we propose the novel Follow-The-Adaptively-Regularized-Leader (FTARL) algorithm in Section \ref{sec:FTL}. While the traditional role played by the regularizer function in FTRL style algorithms is to avoid over-fitting while picking decisions [\cite{kalai2005efficient,shalev2012online}], we curate our regularizer to also ensure our constraints are satisfied. This is achieved by allowing the regularizer to depend on past cost functions as well as past decisions, earning it the moniker of an \emph{adaptive regularizer}. To the best of our knowledge this is the first such usage of history-dependent regularizers. Beyond novelty, these regularizers also provide a direct approach towards establishing regret bounds in this context, as we show in Theorem \ref{thm:FTAPL_regret_bd}.



With the aim of answering \textbf{Q3}, we first show  in Section \ref{sec:main_results} that when the stage costs depend on the full history of past decisions, no algorithm can achieve sublinear regret (Claim \ref{cla:full-history-example}). In the media outlet illustration from Section \ref{sec:intro}, this implies that if consumers have perfect memory, the media outlet cannot hope to achieve sublinear regret with respect to the viewership count. Using the tools developed in Section \ref{sec:FTL}, we then show that when this history dependence is restricted to a shorter horizon, specifically when $H\in o(T)$, we can achieve $\cO(\sqrt{TH})$ regret (Theorem \ref{thm:main_result}). In other words, when consumers are forgetful (i.e., have $o(T)$ memory), the media outlet can still learn what content to output with sublinear regret. A simple extension of our approach also shows that our approach is $H$-agnostic so long as we have an upperbound $H<\Theta \in o(T)$. In this case, we establish a $\cO(\sqrt{T\Theta})$ regret bound (Corollary \ref{cor:upperbound_on_H}). 



\subsection{Related Work} \label{sec:lit_rev}

Our problem is closely related to the framework of Online Convex Optimization with Memory (OCO-M) as analyzed by \cite{Anava2015OnlineMistakes}. In their work, they consider adversarially generated convex cost functions that can depend arbitrarily on a finite number of past decisions. Of the two approaches they present, the one better suited for the Expert Advice type of problem yields a regret bound of $\cO(\sqrt{HT\log(T)})$ in our case. By explicitly considering the nature of history dependence that shows up in our problem, our method offers an improvement by a logarithmic factor over theirs. A related line of work [\cite{Geulen2010RegretAlgorithm,Gyorgy2014Near-optimalCoding}] studies the Experts Advice problem when the cost of each expert also depends on their past actions. The challenge that arises then is that the cost incurred by the learner may be different from that of the picked expert as the two may have taken different actions in the past stages. In contrast, the experts in our setup have \textit{memoryless} costs, and the learner's cost alone depends on past actions.   




Online optimization with history dependent costs has garnered increased attention amongst the control community in recent years with an important application area being Online Linear Control (OLC) [\cite{cohen2018online,abbasi2011adaptive}]. In this framework, the cost at each stage depends on the state of a linear dynamical system and so implicitly depends on all past actions. The primary goal here is to arrive at a linear feedback controller (or some modified version of it [\cite{Agarwal2019OnlineDisturbances}]) at every stage to ensure low regret relative to the best-in-hindsight controller. On the other hand, in our problem, the meaningful notion of regret compares the performance of our algorithm with the performance of the best static action. 

Bearing question \textbf{Q2} in mind, there are multiple lines of research that try to incorporate constraints into an online optimization framework. Some works consider constraints that are adversarially generated  [\cite{kveton2008online,mannor2009online}] while some other consider a single long-term constraint connecting decisions across the learning horizon [\cite{wang2021online,altschuler2018online}]. 
In contrast, we consider a setup with stage-wise constraints that couple the decisions at each step with ones in the past. These constraints are known a-priori and we seek to design a decision making approach that explicitly accounts for them. In a similar spirit, \cite{badiei2015online} consider the problem of online optimization with ramp constraints i.e. known bounds on the magnitude of change in the decision across a step. To address this challenge, they consider a finite look ahead window on future costs. In contrast, we stick with the classical assumption in online optimization frameworks where only historical data is available when making decisions.

\section{Problem Setup} \label{sec:setup}
Let us now formalize our problem statement. Let $\{v^t\}_{t=1}^T \in \bbDelta_n$ denote the sequence of decisions made by the learner over a time horizon of length $T$. Here $\bbDelta_n$  denotes the $n$-dimensional simplex and $v_i^t$ corresponds to the weight given to action $i\in [n]$ at time $t$. In the case of Prediction from Expert Advice,  $v^t_i$ has the special interpretation of being the probability of choosing action $i$ at time $t$. We then define the \emph{time-averaged decision} $x^t$ as 

\begin{equation} \label{eq:aggregate_action}
    x^{t} = h^t(v^1,\dots,v^t) \triangleq  \begin{cases}
        \frac{1}{t} \sum_{\tau=1}^{t} v^\tau & t <  H \\
        \frac{1}{H} \sum_{\tau=t-H+1}^{t} v^\tau & t\geq  H. 
    \end{cases}
\end{equation}

Thus, $x_i^t$ corresponds to the average weight given to action $i$ over the past horizon of length (at-most) $H$. We can view the decision $v^t$ as an \emph{input} and the time averaged decision $x^t$ as the \emph{state} at time $t$. This view makes explicit the idea that $x^t$ is not independent of the past, but is instead generated through an update process. Formalizing this very idea we have
\begin{flalign}
     x^t & = y^{t-1} + \beta^t v^t \quad \forall t\geq 1\label{eq:constrained_action} 
\end{flalign}
where $\beta^t = \frac{1}{\min\{t,H\}}$ and $y^t$ is defined as
\begin{equation*} 
    y^{t} \triangleq \begin{cases}
        \frac{1}{t+1} \sum_{\tau=1}^{t} v^\tau & 1 < t < H \\
        \frac{1}{H} \sum_{\tau=t-H+2}^{t} v^t & t \geq H. 
    \end{cases}
\end{equation*}
with $y^0 \triangleq  \mathbf{0}$. This switch in the view from (\ref{eq:aggregate_action}) to (\ref{eq:constrained_action}) captures precisely the change in viewpoint from $\textbf{Q1}$ to $\textbf{Q2}$ as $x^t$ can now be viewed as being connected to the past (captured through $y^t$) by a constraint.

Let $\{g^t\}_{t=1}^T \subset \bR^n_{\leq 0}$ be the sequence of non-positive cost vectors faced by the learner. These cost functions may be generated adversarially but, if so, we assume that the adversary is oblivious, i.e., the cost functions do not adapt to the realized past decisions of the learner. The cost incurred by the learner at time $t$ in our model is then simply $\langle g^t,x^t \rangle$. Naturally, the goal of the learner is to incur a low total cost $\sum_{t=1}^T \langle g^t,x^t \rangle$. \\


There are two notions of regret we can consider. First, the standard notion of regret generally considered in the \emph{memoryless} case (i.e. in the event we could choose the decision $x^t$ at every step independent of the past) that is defined as 
\begin{flalign} \cR_T = \sum_{t=1}^T \big\langle g^t, x^t \big\rangle - \min_{x\in \bbDelta_n} \langle G^T, x \rangle \text{ where } G^t\triangleq \sum_{\tau=1}^t g^{\tau}. \label{eq:regret_OLO_simplex} \end{flalign}
At a first glance, this appears like a very strong notion of regret for our setup since our decisions are connected by the constraints in (\ref{eq:constrained_action}).  The second weaker notion of regret -- one that is routinely employed in  OCO-M literature  [\cite{Anava2015OnlineMistakes,Arora2012OnlineRegret}] -- is that of \emph{policy regret} and is defined for our setup as
\begin{flalign} 
\cR_{T,\text{Pol}} \triangleq \sum_{t=1}^T \langle g^t, x^t \rangle - \min_{v\in \bbDelta_n} \sum_{t=1}^T \langle g^t, h^t(v,\dots,v) \rangle. \label{eq:pol_regret}  
\end{flalign}
However, in the special case where dependence on past decisions is captured through an averaging process, we can state the following 
\begin{lem} \label{lem:polR=R}
    When decisions $\{x^t\}_{t=1}^T$ satisfy the form in (\ref{eq:aggregate_action}), $\cR_T = \cR_{T,\text{Pol}}$.  
\end{lem}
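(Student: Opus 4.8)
The plan is to notice that the two regret notions in (\ref{eq:regret_OLO_simplex}) and (\ref{eq:pol_regret}) share exactly the same ``online'' term $\sum_{t=1}^T \langle g^t, x^t\rangle$ once $\{x^t\}$ satisfies (\ref{eq:aggregate_action}). Hence it suffices to show that the two offline benchmarks coincide, i.e.
\[
\min_{x\in\bbDelta_n}\langle G^T, x\rangle \;=\; \min_{v\in\bbDelta_n}\sum_{t=1}^T\langle g^t, h^t(v,\dots,v)\rangle .
\]

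The key step is a one-line computation: for any fixed $v\in\bbDelta_n$ and every $t$, the map $h^t$ evaluated at the constant input sequence returns $v$ itself, $h^t(v,\dots,v) = v$. This is immediate from (\ref{eq:aggregate_action}), since $\frac{1}{t}\sum_{\tau=1}^{t} v = v$ for $t<H$ and $\frac{1}{H}\sum_{\tau=t-H+1}^{t} v = v$ for $t\geq H$; in words, a constant stream of decisions induces the constant time-averaged state equal to it, which is consistent with the intuition that maintaining a fixed ``reputation''/``production mix'' is always feasible under the averaging dynamics. Substituting this identity, the policy-regret benchmark simplifies to $\sum_{t=1}^T\langle g^t, v\rangle = \big\langle \sum_{t=1}^T g^t,\, v\big\rangle = \langle G^T, v\rangle$, and minimizing over $v\in\bbDelta_n$ yields precisely $\min_{x\in\bbDelta_n}\langle G^T, x\rangle$. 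Combining, the comparators agree and therefore $\cR_T = \cR_{T,\text{Pol}}$.

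I do not expect a real obstacle here: the entire content is the identity $h^t(v,\dots,v)=v$, which follows from linearity of averaging, and the rest is bookkeeping. The only point to be mildly careful about is that this identity must be checked for both branches of the piecewise definition in (\ref{eq:aggregate_action}) (the $t<H$ transient regime and the $t\geq H$ regime), so the write-up should handle both cases explicitly, even though each is a trivial one-line computation.
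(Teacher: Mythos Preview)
Your proposal is correct and takes essentially the same approach as the paper: both rest on the single observation that $h^t(v,\dots,v)=v$ from (\ref{eq:aggregate_action}), after which the two benchmarks (and hence the two regrets) coincide. Your write-up is simply a more detailed unpacking of this one-line argument.
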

\begin{proof}
    This follows simply from noting that $h^t(v,\dots,v)=v$ from the definition in (\ref{eq:aggregate_action}).
\end{proof}
Guided by Lemma \ref{lem:polR=R}, in this paper, we will continue to work with the regret as defined in (\ref{eq:regret_OLO_simplex}). Since we are working with this stronger notion of regret, any regret bound we obtain allows us to compare the performance of our algorithm with the performance of the best static action under \emph{no} history dependence. We make an additional note that when an algorithm picks decisions $\{x^t\}_{t=1}^T$ in a stochastic manner, the appropriate metric to evaluate its performance is \emph{expected regret}. This is defined simply by taking expectations over the definition in (\ref{eq:regret_OLO_simplex}) and they are taken with respect to the randomness in the algorithm. For the purposes of brevity, in this paper we will henceforth refer to `expected regret' simply as `regret' and assume it is understood from context which notion we are employing.


Our goal in this paper is to design an algorithm that allows the learner to generate decisions $\{x^t\}_{t=1}^T$ (that take the form in (\ref{eq:aggregate_action})) while ensuring that regret defined in (\ref{eq:regret_OLO_simplex}) grows sub-linearly with $T$. To this end we will first develop some theory in the following section, which will motivate our algorithm and help us in analyzing its performance.

\section{Going Beyond Follow-the-Leader} \label{sec:FTL}
\subsection{Preliminaries on Follow-The-Leader type algorithms}
Let $\{l^t(\cdot)\}_{t=1}^T$ denote the sequence of the cost functions -- mapping decision set $\cX \subset \bR^n$ to $\bR$ -- faced by the learner over a horizon of length $T$. We define the sub-sequence of cost functions and learner's actions until time $t$ as the history $\cH^t=\{(l^\tau,x^\tau)\}_{\tau=1}^{t}$. In line with the standard online optimization framework, we assume that when making decision $x^{t+1}$ the learner only has access to $\cH^{t}$. 
Let us now look more closely at one class of algorithms that are routinely applied to solve online optimization problems: Follow-The-Leader (FTL) type algorithms. Consider first the canonical FTL algorithm that picks the decision $x^t$ at time t such that
\begin{equation} \label{eq:FTL_alg}
    x^t \in \argmin_{x\in\cX} \sum_{\tau=1}^{t-1} l^\tau (x) \; \forall t > 1, \quad x^1 \in \cX.
\end{equation}
Note that we have not described either the class of cost functions or the decision set $\cX$. Throughout this section, we only assume that the minimum in (\ref{eq:FTL_alg}) exists and in the event of multiple minimizers, one is picked arbitrarily. 
For this algorithm we have the following well-known result bounding the regret of the FTL algorithm [\cite{cesa2006prediction,kalai2005efficient}].       
\begin{thm} \label{thm:FTL_regret_bd}
When $\{x^t\}$ is generated according to (\ref{eq:FTL_alg}), 
\begin{equation} \label{eq:FTL_regret}
    \sum_{t=1}^T l^t(x^t) - \min_{x\in\cX} \sum_{t=1}^T l^t (x) \leq \sum_{t=1}^T l^t(x^t) - l^t(x^{t+1}).
\end{equation}
\end{thm}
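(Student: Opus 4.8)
The plan is to reduce the claim to the classical ``Be-The-Leader'' inequality, namely that the \emph{hypothetical} player who is allowed to use $x^{t+1}$ at stage $t$ does at least as well as any fixed comparator:
\begin{equation*}
\sum_{t=1}^T l^t(x^{t+1}) \le \sum_{t=1}^T l^t(u) \quad \text{for all } u \in \cX.
\end{equation*}
Once this is in hand, the theorem follows in one line: choosing $u$ to be a minimizer of $\sum_{t=1}^T l^t(\cdot)$ appearing on the left of (\ref{eq:FTL_regret}) and rearranging gives $\sum_{t=1}^T l^t(x^t) - \min_{x\in\cX}\sum_{t=1}^T l^t(x) \le \sum_{t=1}^T l^t(x^t) - \sum_{t=1}^T l^t(x^{t+1})$, which is exactly the right-hand side of (\ref{eq:FTL_regret}) after grouping terms.

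I would prove the Be-The-Leader inequality by induction on $T$. For $T=1$ it is precisely the defining property of $x^2$ in (\ref{eq:FTL_alg}) (the case $t=2$): $x^2$ minimizes $l^1$ over $\cX$, so $l^1(x^2) \le l^1(u)$ for every $u$. For the inductive step, assume the inequality for horizon $T-1$ and apply it with the particular comparator $u = x^{T+1}$, obtaining $\sum_{t=1}^{T-1} l^t(x^{t+1}) \le \sum_{t=1}^{T-1} l^t(x^{T+1})$. Adding $l^T(x^{T+1})$ to both sides yields $\sum_{t=1}^{T} l^t(x^{t+1}) \le \sum_{t=1}^{T} l^t(x^{T+1})$; and since $x^{T+1} \in \argmin_{x\in\cX}\sum_{t=1}^{T} l^t(x)$ by (\ref{eq:FTL_alg}), the right-hand side is at most $\sum_{t=1}^{T} l^t(u)$ for arbitrary $u\in\cX$. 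Chaining the two bounds closes the induction. Note that the sequence $x^2,\dots,x^{T+1}$ is the same object throughout, since each $x^{t+1}$ depends only on $l^1,\dots,l^t$, so there is no consistency issue in peeling off the last stage.

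There is essentially no analytic obstacle here: no convexity, smoothness, or compactness of $\cX$ is needed beyond the standing assumption that the minimum in (\ref{eq:FTL_alg}) is attained. The one point deserving a line of care is the arbitrary tie-breaking among multiple minimizers — the induction must be phrased so that at each stage it invokes only the fact that the \emph{selected} $x^{t+1}$ attains the minimum of $\sum_{\tau=1}^{t} l^\tau$, which is true for whichever minimizer the algorithm happens to pick, so the argument is insensitive to the tie-breaking rule. I would also emphasize that the resulting bound is purely structural and holds term-by-term, making no use of the online/offline information split; this is exactly why it will serve later as the scaffold onto which the adaptive-regularizer analysis of Section \ref{sec:FTL} is built.
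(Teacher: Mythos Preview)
Your proof is correct and is the standard inductive Be-The-Leader argument. The paper does not actually supply its own proof of Theorem~\ref{thm:FTL_regret_bd}; it states the result as well known and cites \cite{cesa2006prediction,kalai2005efficient}, whose arguments are exactly the induction you give, so there is no meaningful difference to report.
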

While the FTL algorithm captured in (\ref{eq:FTL_alg}) leads to sublinear regret in some well structured online optimization problems, it can lead to linear regret even in some simple cases like when $l^t(\cdot)$ is linear [\cite{shalev2012online}]. Consequently, closely related methods like Follow-the-Regularized-Leader [\cite{shalev2007primal}] and Follow-The-Perturbed-Leader [\cite{kalai2005efficient,hannan1957approximation}] were developed to ensure sublinear regret in a wide variety of online learning setups. Taking a page out of this book we propose the \textit{Follow-The-Adaptively-Regularized-Leader} (FTARL) algorithm. 
\subsection{Follow-The-Adaptively-Regularized-Leader Algorithm}
Going beyond existing methods, our method picks a regularizer function that explicitly depends on the history $\cH^t$ faced by the learner, hence the term \emph{adaptive}. Mathematically, the learner makes decisions $\{x^t\}_{1}^T$ according to
\begin{subequations}
\begin{flalign}
    x^{t+1} & \in \argmin_{x\in\cX} \bigg(R^{t}(x,\cH^t) + \sum_{\tau=1}^{t} l^\tau (x)\bigg) \; \forall t \geq 1,  \label{eq:FTAPL_xt}
    \\ x^1 & \in \argmin_{x\in\cX} R^0(x). \label{eq:FTAPL_x0}
\end{flalign} \label{eq:FTAPL_alg}
\end{subequations} 
While we introduce additionally restrictions on the nature of the regularizers $R^t(\cdot,\cdot)$ in later sections, for now we only assume that the regularizer ensures the existence of minimizers in (\ref{eq:FTAPL_alg}). In (\ref{eq:FTAPL_alg}) the second argument of the regularizer $R^t(\cdot,\cdot)$ highlights the dependence on history, but for brevity's sake we drop the argument henceforth and assume this history dependence is implicit. We now present a regret bound theorem for the FTARL algorithm presented in (\ref{eq:FTAPL_alg}).

\begin{thm} \label{thm:FTAPL_regret_bd}
    \textbf{(FTARL Regret Bound)} Let $\{x^t\}_1^T$ be picked according to the algorithm presented in (\ref{eq:FTAPL_alg}), then for any $x'\in \cX$ 
    \begin{flalign}
        \sum_{t=1}^T l^t(x^t) - l^t (x') \leq \sum_{t=1}^T l^t(x^t) - l^t(x^{t+1}) + 
        \sum_{t=0}^{T-1} \big(R^t(x^{t+2}) - R^t(x^{t+1})\big) +  R^T(x')-R^T(x^{T+1}). \notag 
    \end{flalign}
    Indeed picking $x'$ as the best-in-hindsight strategy gives us a regret bound. \\
    \begin{proof}
        Let us define $\tilde{l}^0(\cdot)= R^0(\cdot)$ and 
        $\tilde{l}^t(\cdot)= l^t(\cdot) + R^t(\cdot)- R^{t-1}(\cdot)$ for $t\geq 1$. Then it is easy to see that running the FTARL algorithm in (\ref{eq:FTAPL_alg}) is equivalent to running the FTL algorithm from (\ref{eq:FTL_alg}) with cost functions $\tilde l$ at every time step starting from $t=0$. Then as a direct corollary of Theorem \ref{thm:FTL_regret_bd} we have \begin{equation}
            - \sum_{t=0}^T \tilde l^t(x') \leq - \min_{x\in \cX}  \sum_{t=0}^T \tilde l^t(x) \leq - \sum_{t=0}^T \tilde l^t(x^{t+1}) \label{eq:FTAPL_pf_ineq}
        \end{equation}
        The remainder of the proof is then an algebraic exercise involving expanding expressions for $\tilde l_t(\cdot)$ in both LHS and RHS of (\ref{eq:FTAPL_pf_ineq}), rearranging terms and finally, adding $\sum_{t=1}^T l^t(x^t)$ to both sides. 
    \end{proof} 
    \end{thm}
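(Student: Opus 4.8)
The plan is to show that FTARL is, after a change of variables, ordinary Follow-The-Leader applied to a surrogate loss stream, and then to push the bound of Theorem~\ref{thm:FTL_regret_bd} through that change of variables. The surrogate stream records, at each round, the new loss together with the \emph{increment} in the regularizer.

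Concretely, I would set $\tilde l^0(\cdot)\deq R^0(\cdot)$ and $\tilde l^t(\cdot)\deq l^t(\cdot)+R^t(\cdot)-R^{t-1}(\cdot)$ for $t\ge 1$. A one-line telescoping computation gives, for every $t\ge 0$, $\sum_{\tau=0}^{t}\tilde l^\tau(x)=R^t(x)+\sum_{\tau=1}^{t}l^\tau(x)$, which for $t=0$ reads $\tilde l^0(x)=R^0(x)$; so the problem defining $x^{t+1}$ in (\ref{eq:FTAPL_xt})--(\ref{eq:FTAPL_x0}) is exactly minimization of the prefix sum $\sum_{\tau=0}^{t}\tilde l^\tau$. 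Hence $\{x^{t+1}\}_{t=0}^{T}$ is precisely the FTL iterate sequence for the losses $\tilde l^0,\dots,\tilde l^T$ (with the time index shifted to start at $0$, which is cosmetic).

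Next I would invoke Theorem~\ref{thm:FTL_regret_bd} on this surrogate stream. Subtracting the learner's cumulative surrogate loss from both sides of (\ref{eq:FTL_regret}) collapses it to the ``Be-The-Leader'' inequality $\sum_{t=0}^{T}\tilde l^t(x^{t+1})\le\min_{x\in\cX}\sum_{t=0}^{T}\tilde l^t(x)\le\sum_{t=0}^{T}\tilde l^t(x')$ for any $x'\in\cX$, which is exactly (\ref{eq:FTAPL_pf_ineq}); note this form no longer references the arbitrary initial FTL point, so the time shift is harmless. It then remains to substitute $\tilde l^t=l^t+R^t-R^{t-1}$ into both ends. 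On the $x'$ side the regularizer differences telescope fully, giving $\sum_{t=0}^{T}\tilde l^t(x')=\sum_{t=1}^{T}l^t(x')+R^T(x')$. On the $x^{t+1}$ side the argument of $R^t$ moves with $t$, so one re-indexes $\sum_{t=1}^{T}R^{t-1}(x^{t+1})=\sum_{t=0}^{T-1}R^t(x^{t+2})$ and collects the boundary pieces to get $\sum_{t=0}^{T}\tilde l^t(x^{t+1})=\sum_{t=1}^{T}l^t(x^{t+1})+R^T(x^{T+1})-\sum_{t=0}^{T-1}\big(R^t(x^{t+2})-R^t(x^{t+1})\big)$. Plugging both expansions into the Be-The-Leader inequality and rearranging yields $\sum_{t=1}^{T}l^t(x^{t+1})-\sum_{t=1}^{T}l^t(x')\le\sum_{t=0}^{T-1}\big(R^t(x^{t+2})-R^t(x^{t+1})\big)+R^T(x')-R^T(x^{T+1})$; finally, adding $\sum_{t=1}^{T}\big(l^t(x^t)-l^t(x^{t+1})\big)$ to both sides converts the left-hand side into $\sum_{t=1}^{T}\big(l^t(x^t)-l^t(x')\big)$ and produces exactly the claimed bound, and taking $x'$ to be the best-in-hindsight action makes the left-hand side the regret.

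I do not expect a genuinely hard step: the whole content is the telescoping identity that makes the FTARL objective a prefix sum of the $\tilde l^t$'s, and everything afterwards is bookkeeping. The one thing to get right is the off-by-one alignment of indices --- round $t$ of the surrogate carries $R^t-R^{t-1}$, while the previous leader $x^{t+1}$ was optimal for a sum ending in $R^{t}$, and in the residual $R^t$ is evaluated at $x^{t+2}$ rather than at $x^{t+1}$. Tracking these indices correctly is exactly what leaves the asymmetric residual $\sum_{t=0}^{T-1}\big(R^t(x^{t+2})-R^t(x^{t+1})\big)$ rather than a clean cancellation, and it is this residual that the subsequent sections will need to control through the choice of the $R^t$'s.
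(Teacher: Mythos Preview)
Your proposal is correct and follows essentially the same approach as the paper: define the surrogate losses $\tilde l^0=R^0$, $\tilde l^t=l^t+R^t-R^{t-1}$, observe that FTARL is FTL on this stream, apply Theorem~\ref{thm:FTL_regret_bd} to obtain the Be-The-Leader inequality (\ref{eq:FTAPL_pf_ineq}), and then expand, telescope, and add $\sum_{t=1}^T l^t(x^t)$ to both sides. Your write-up simply fills in the algebraic bookkeeping that the paper leaves as an exercise, and your handling of the index shift (noting that the arbitrary FTL initial point drops out once one cancels the learner's cumulative loss) is exactly right.
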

    By allowing the regularizer in (\ref{eq:FTAPL_alg}) to also depend on the past decisions, we can use it to explicitly enforce constraints that relate past decisions to current decisions. In the following section, we will illustrate this ability by designing the regularizer function so that the decisions $x^t$ taken by our FTARL algorithm at time $t$ takes form described in (\ref{eq:aggregate_action}) from Section \ref{sec:setup}. We will then employ Theorem \ref{thm:FTAPL_regret_bd} to analyze the performance of resulting algorithm.

\section{FTARL for Average Dependent Costs} \label{sec:main_results}



Let us begin by restating our objective as outlined at the end of Section \ref{sec:setup}. In doing so, we will stick with the viewpoint 
\textbf{Q2}. We want our learning agent to establish states $\{x^t\}_{t=1}^T$ with the aim of incurring regret, as defined in (\ref{eq:regret_OLO_simplex}), that grows sub-linearly with $T$. Additionally, any algorithm $\cA$ that the learner employs to achieve this goal must satisfy two properties.
\begin{enumerate}[noitemsep]
    \item First, it must ensure that $x^t$ respects the relation in (\ref{eq:constrained_action}) at every stage.
    \item Second, it must only rely on information $\cH^{t-1}$ that is available to the learner at time $t$. 
\end{enumerate}


We will design such an algorithm based on the FTARL algorithm framework introduced in Section \ref{sec:FTL}. Our first step then will be to design adaptive regularizers $\{R^t(\cdot)\}_{t=0}^T$.  

 


\subsection{Designing the Adaptive Regularizer} \label{sec:regu_design}

Let $Z\in\bR^n_+$ be a random vector with every element being picked i.i.d from an exponential distribution i.e. $Z_i \stackrel{i.i.d}{\sim} \exp{(\epsilon)}$. We then define the random sequence 

\begin{equation}
    v^{t}_* \in \argmin_{v\in \bbDelta_n} \langle G^{t-1} - Z , v \rangle \quad \forall t\geq 1 \label{eq:FTPL_seq}
\end{equation}
where $G^0 = \textbf{0}_n \in \bR^n $. Owing to the stochasticity in $Z$, $v^t_*$ is a random variable and is uniquely defined almost surely. In the event of non-uniqueness, we assume that one of the minimizers is picked arbitrarily. We then define our regularizer function as
\begin{subequations}
\begin{flalign}
    R^0(x)&=-\langle Z, x \rangle \label{eq:FTAPL_reg0} \\
    R^t(x)&= \delta\bigg(\frac{1}{2} \|x-y^t\|^2_2 -  \beta^{t+1} \langle v^{t+1}_* ,x\rangle\bigg) -  \langle G^t,x\rangle \label{eq:FTAPL_regt} \quad \forall t\geq 1
\end{flalign} \label{eq:FTAPL_reg}
\end{subequations} 
for all $t\geq 1$. First, note  that regularizers defined in (\ref{eq:FTAPL_reg}) guarantee the existence of minimizers in (\ref{eq:FTAPL_alg}) when $\cX=\bbDelta_n$ and $l^t(\cdot)=\langle g^t,\cdot \rangle$ and so, the FTARL algorithm is well-defined. 
Secondly, note that these regularizers depend only on the history $\cH^t$ and so the causality property of the corresponding FTARL algorithm is satisfied. The dependence on past decisions is captured through $y^t$ and the dependence on the cost functions is captured both explicitly, through $G^t$, and implicitly, through $v_*^t$. Finally, we make the following

\begin{remark} \label{rem:FTARL_gives_dyn}
    For all $t\geq1$, the decisions $\{x^t\}_{t=1}^T$ induced by the FTARL algorithm in (\ref{eq:FTAPL_alg}) (with regularizers as defined in (\ref{eq:FTAPL_reg})), satisfy the relation in (\ref{eq:constrained_action}).
\end{remark} 

Note that $v^t_*$, as defined in (\ref{eq:FTPL_seq}), can essentially be viewed as the sequence of outputs of a Follow-the-Perturbed-Leader (FTPL) algorithm [\cite{kalai2005efficient}] when the perturbation picked is $-Z$ and stage costs are \textit{memoryless}. Thus, our algorithm is closely connected to the FTL class of algorithms. In practice, our algorithm can be implemented without the optimization step in (\ref{eq:FTAPL_alg}) or any explicit consideration of the regularizers in (\ref{eq:FTAPL_reg}), as highlighted in Algorithm \ref{alg:FTARL}. But, as we will see in Section \ref{sec:reg_bd}, these regularizers play a critical role in our analytic approach for establishing regret bounds. 

\begin{algorithm}
\caption{FTARL for History-Average Dependent Costs}
\begin{algorithmic}[1]
\REQUIRE Learning rate $\epsilon > 0$
\STATE Draw perturbation $Z_i \stackrel{i.i.d}{\sim} \exp{(\epsilon)}$
\FOR{each round $t=1,2,\ldots,T$}
    \STATE Pick decision $v^t = \argmin_{v\in \bbDelta_n} \langle G^{t-1} - Z , v \rangle \quad$
    \STATE Update state $x^t = y^{t-1} + \beta^t v^t$ 
    \STATE Observe cost vector $g^t$; Incur the loss $\langle g^t, x_t \rangle$ 
\ENDFOR
\end{algorithmic}
\label{alg:FTARL}
\end{algorithm}

\subsection{The Challenge of Horizon Length} \label{sec:counter_ex}
Before arriving at regret bounds for the proposed algorithm, we first present a result that highlights the limitations faced by a learner in our problem setup. We will constructively show that no algorithm can guarantee regret that grows sublinearly with $H$. In other words, if the horizon $H$ over which decision averaging takes place scales linearly with $T$, 
we have no hope of designing an algorithm that achieves an $o(T)$ regret. Equipped with this insight, we then obtain our regret bound under a restriction on the length of this horizon. 

\begin{claim} \label{cla:full-history-example}
    Let $H \leq 0.8 T$ be an multiple of $4$ and define $T_s = T-\frac{H}{4}$. Consider the sequence of two-dimensional cost vectors $\{ g^t\}_{t=1}^T$ such that 
\begin{flalign*}
    g^t  = \begin{bmatrix}
        0 \\
        0
    \end{bmatrix} \; \forall t\leq T_s, \; \; 
    g^t  = \begin{cases}
        [-1 \quad 0]^T \;\;\forall t>T_s & w.p. \; 1/2\\
        [0 \quad -1]^T \;\; \forall t>T_s & w.p. \; 1/2
    \end{cases}  \; 
\end{flalign*}
Then for any (possibly randomized) algorithm $\cA$ that generates $\{x^t\}_{t=1}^T$ in accordance with the process in (\ref{eq:aggregate_action}) we have
\begin{flalign*}
    \bE[\cR_T] \geq  H/32
\end{flalign*}
where the expectation is taken over the randomness in the cost function (as well the randomness, if any, in the algorithm). 
\end{claim}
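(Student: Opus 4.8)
The plan is to exploit a causality obstruction: the learner's decisions over the first $T_s$ stages see only the all-zero cost vectors $g^1,\dots,g^{T_s}$, so their joint law is independent of the fair coin $c\in\{1,2\}$ that picks the cost direction for $t>T_s$; yet during the active phase $t\in\{T_s+1,\dots,T\}$ the relation (\ref{eq:aggregate_action}) forces the state $x^t$ to be an $H$-step running average whose window is dominated by those coin-oblivious decisions. Hence the learner cannot, in expectation over the coin, concentrate enough weight on the good coordinate, whichever it turns out to be.

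First I would dispose of the easy pieces. Since $g^t=\mathbf 0$ for $t\le T_s$ and exactly $T-T_s=H/4$ cost vectors equal to $-e_1$ (resp.\ $-e_2$) follow, we have $G^T\in\{-(H/4)e_1,\,-(H/4)e_2\}$, so $\min_{x\in\bbDelta_2}\langle G^T,x\rangle=-H/4$ deterministically, while the learner's realized cost is $\sum_{t=T_s+1}^{T}\langle g^t,x^t\rangle=-\sum_{t=T_s+1}^{T}x^t_c$; therefore $\cR_T=\tfrac H4-\sum_{t=T_s+1}^{T}x^t_c$. I would also check that $4\mid H$ and $H\le 0.8T$ force $T_s+1\ge H$, so for every active $t$ the first branch of (\ref{eq:aggregate_action}) never triggers and $x^t=\frac1H\sum_{\tau=t-H+1}^{t}v^\tau$ with each window of size exactly $H$.

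The core step splits each active window $\{t-H+1,\dots,t\}$ at the index $T_s+1$. By induction on $\tau$, the decisions $v^1,\dots,v^{T_s+1}$ are functions of the learner's internal randomness alone (their information sets contain only zero cost vectors and earlier decisions), hence independent of $c$, whereas $v^{T_s+2},\dots,v^{T}$ may depend on $c$. Exchanging the order of summation, $\sum_{t=T_s+1}^{T}x^t_c=\frac1H\sum_{\tau}N_\tau\,v^\tau_c$, where $N_\tau$, the number of active $t$ whose window contains $\tau$, is deterministic with $\sum_\tau N_\tau=(H/4)\cdot H$ and $\sum_{\tau\ge T_s+2}N_\tau\le (H/4)^2$ (at most $H/4$ such $\tau$, each in at most $H/4$ active windows). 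Taking expectation over the fair coin: for $\tau\le T_s+1$, coin-independence and $v^\tau\in\bbDelta_2$ give $\bE[v^\tau_c]=\tfrac12$, and for $\tau\ge T_s+2$ we simply use $\bE[v^\tau_c]\le 1$. Hence $\bE\big[\sum_{t=T_s+1}^{T}x^t_c\big]\le\frac1H\big(\tfrac12\sum_\tau N_\tau+\tfrac12\sum_{\tau\ge T_s+2}N_\tau\big)\le\frac1H\big(\tfrac{H^2}{8}+\tfrac{H^2}{32}\big)=\tfrac{5H}{32}$, so $\bE[\cR_T]\ge\tfrac H4-\tfrac{5H}{32}=\tfrac{3H}{32}\ge\tfrac H{32}$ (a sharper bookkeeping of the $N_\tau$ yields the larger constant $7/64$).

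The main obstacle is the conditioning/independence bookkeeping in that core step: one must argue cleanly that, for an \emph{arbitrary} randomized learner, the pre-$T_s$ decisions are independent of the coin — so their expected contribution to the chosen coordinate is exactly half the mass they carry — while simultaneously ruling out that the post-$T_s$ decisions, however adaptively chosen once the coin is revealed, can add more than $\tfrac1H\sum_{\tau\ge T_s+2}N_\tau$ to the active sum, each entering at most $N_\tau$ windows with weight $1/H$. Once this is in place, the rest is elementary arithmetic-series estimation.
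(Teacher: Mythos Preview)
Your argument is correct and proves the claim, but it proceeds along a genuinely different line than the paper's own proof.

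The paper conditions on the event $\{x_1^{T_s}\le 1/2\}$ (and, symmetrically, on its complement), then uses the Lipschitz behavior of the running average to show the state cannot move far in $H/4$ steps: $x_1^t\le \tfrac12+\tfrac{t-T_s}{H}\le \tfrac34$ for every active $t$. Averaging over the fair coin then yields a lower bound of $-7H/32$ on the learner's cost, hence regret at least $H/32$. In short, the paper bounds the \emph{state trajectory} uniformly over the active window.

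You instead unroll the averaging and swap the order of summation, writing $\sum_{t>T_s}x^t_c=\tfrac1H\sum_\tau N_\tau v^\tau_c$, and then split the $v^\tau$'s into coin-oblivious ones ($\tau\le T_s+1$), for which $\bE[v^\tau_c]=\tfrac12$ by independence, and coin-aware ones ($\tau\ge T_s+2$), for which you only use $v^\tau_c\le1$ together with the combinatorial bound $\sum_{\tau\ge T_s+2}N_\tau\le(H/4)^2$. This decomposition-by-decision approach avoids any conditioning on the state at time $T_s$ and directly exposes the causality obstruction; it also gives a sharper constant ($3H/32$, or $7H/64$ with the tighter triangular count for $N_\tau$) than the paper's $H/32$. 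The paper's trajectory bound is arguably quicker to state, while your window-counting argument is more transparent about why no randomized learner can beat the bound and generalizes more readily to other weighting schemes.
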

\begin{proof}
    Effectively, at $t=T_s$, a fair coin is flipped once and the cost vector for all subsequent time steps is determined based on the result of this flip. Let us denote by random variable $\theta$ the outcome of the coin flip. Indeed, if the learner could see this coin flip before hand they could easily achieve the best-in-hindsight cost of
    \begin{flalign}
        \min_{x\in \bbDelta_2} \langle G^T,x \rangle = \frac{-H}{4} \label{eq:pf_linear_regret_bih}   
    \end{flalign}
    by playing the best response to the costs incurred after the coin flip for all $T$ stages. 
    Recalling that $x_1^{T_s}+x_2^{T_s}=1$, let us first assume that the learning algorithm generates $x_1^{T_s}\leq 1/2$. With this assumption, from (\ref{eq:aggregate_action}), $\forall t>T_s$ we have 
    \begin{flalign}
        x^t_1 &  = x^{T_s}_1 +  \frac{1}{H} \bigg(\sum_{\tau = T_s+1}^t v^\tau_1 - \sum_{\tau = T_s-H+1}^{t-H} v^t_1 \bigg) \leq \frac{1}{2} + \frac{(t-T_s)}{H} \leq \frac{3}{4}.  \label{eq:pf_linear_regret}
    \end{flalign}
    Then taking expectations both over the coin flip and the randomization in the learning algorithm, the expected cost incurred by the learner after the coin flip is
    \begin{flalign}
        \bE_{\theta,\cA}\bigg[\sum_{t=T_s+1}^T \langle g^t, x^t \rangle\bigg|x_1^{T_s}\leq 0.5 \bigg] & = \frac{-1}{2} \sum_{T_s+1}^T \big(\bE_\cA[x_1^t|x_1^{T_s}\leq 0.5] + \bE_{\cA}[x_2^t|x_1^{T_s}\leq 0.5]\big) \notag\\
        & \stackrel{(a)}{\geq} \frac{7}{8} (T_s-T) = \frac{-7H}{32}. \notag    
    \end{flalign}
    where the inequality $(a)$ results from (\ref{eq:pf_linear_regret}) and from noting that $x_2^t\leq 1$ for all $t$. An identical lower bound can be obtained if we instead assumed $x_2^{T_s}< 1/2$ and consequently, by law of total expectation we have $$\bE_{\theta,\cA}\bigg[\sum_{t=1}^T \langle g^t, x^t \rangle\bigg] \geq \frac{-7H}{32}$$ 
    Comparing this with the cost of best-in-hindsight action from (\ref{eq:pf_linear_regret_bih}) gives us the required lower bound on regret.
\end{proof}

Note that in Claim \ref{cla:full-history-example}, the assumption $H\leq 0.8 T$ is made only for technical reasons and it is possible to generate similar examples where $\Omega(H)$ regret is guaranteed for any $H\leq T$. Nevertheless, this claim lends us the insight that when $H\in \Omega(T)$ the learner cannot hope to achieve regret sublinear in $T$. With the hope of arriving at sublinear regret algorithms for our problem, the natural regime to explore then is when $H$ scales sub-linearly with $T$. 
\subsection{Regret Analysis} \label{sec:reg_bd}
Inline with the discussion in the previous section, henceforth in this paper, we will assume $H\in o(T)$. 
We now present the main result of our work 

\begin{thm} \label{thm:main_result}
    Running algorithm (\ref{eq:FTAPL_alg}) with regularizers defined in (\ref{eq:FTAPL_reg}) (or, equivalently, Algorithm \ref{alg:FTARL}) with $\epsilon=\sqrt{\frac{4 (\log(n) + 1)}{M^2(T-H)(2+H)}}$  results in an expected regret 
    \begin{flalign*}
        & \bE[\cR_T] \leq 5MH +  4 M \sqrt{(T-H) (H+2) (\log(n)+1)},
    \end{flalign*} 
    where the expectation is taken over the distribution of $Z$ employed in (\ref{eq:FTPL_seq}) and $M$ is a bound on $\|g^t\|_\infty$.
\end{thm}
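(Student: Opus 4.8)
The plan is to exploit the structural fact behind Remark~\ref{rem:FTARL_gives_dyn} and Algorithm~\ref{alg:FTARL}: the states $x^t$ produced by our scheme are nothing but the running window-averages $x^t=\frac{1}{\min\{t,H\}}\sum_{\tau=\max\{1,\,t-H+1\}}^{t}v_*^\tau$ of the iterates $v_*^\tau=\argmin_{v\in\bbDelta_n}\langle G^{\tau-1}-Z,v\rangle$ of a plain Follow-the-Perturbed-Leader rule run on the original losses $\{g^t\}$ with exponential perturbation $-Z$. The adaptive regularizer of Section~\ref{sec:regu_design} thus plays the role of certifying the constraint (\ref{eq:constrained_action}), while the regret itself can be bounded by that of FTPL plus corrections for (i) the first $H-1$ rounds, during which the averaging window has not yet filled, and (ii) the slow drift of $v_*^\tau$ over a window of length $H$.

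First I would split $\cR_T=\sum_{t=1}^{H-1}\langle g^t,x^t-x'\rangle+\sum_{t=H}^{T}\langle g^t,x^t-x'\rangle$ with $x'\in\argmin_{x\in\bbDelta_n}\langle G^T,x\rangle$. Since $g^t\le 0$ entrywise and $\|g^t\|_\infty\le M$, each term in the first sum lies in $[-M,M]$, so that block contributes at most $M(H-1)$. In the second block I would insert $v_*^{t+1}$ as a pivot, $\langle g^t,x^t\rangle=\langle g^t,v_*^{t+1}\rangle+\langle g^t,\,x^t-v_*^{t+1}\rangle$, and analyze the two resulting sums separately.

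For $\sum_t\langle g^t,v_*^{t+1}\rangle$ I would use the Be-the-Leader inequality --- Theorem~\ref{thm:FTL_regret_bd} applied to the loss sequence $(-\langle Z,\cdot\rangle,\langle g^1,\cdot\rangle,\langle g^2,\cdot\rangle,\dots)$, whose leader after round $t$ is exactly $v_*^{t+1}$ --- to get $\sum_{t=1}^{T}\langle g^t,v_*^{t+1}\rangle\le\langle G^T,x'\rangle+\max_i Z_i$; discarding the rounds $t<H$ loses a further $M(H-1)$, and taking $\bE$ over $Z$ contributes $\bE[\max_i Z_i]=\epsilon^{-1}\sum_{k=1}^{n}\frac1k\le\epsilon^{-1}(\log n+1)$. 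For the drift term I would telescope $x^t-v_*^{t+1}=\frac1H\sum_{k=1}^{H}k\,(v_*^{t-H+k}-v_*^{t-H+k+1})$ and bound each increment with the standard one-step stability of exponentially-perturbed FTPL, $\bE_Z[\langle a,v_*^s-v_*^{s+1}\rangle]=\cO(\epsilon\,\|a\|_\infty\,\|g^s\|_\infty)$. As $\frac1H\sum_{k=1}^{H}k=\frac{H+1}{2}$, round $t$ then contributes $\cO(\epsilon M^2 H)$, and the block as a whole $\cO\big(\epsilon M^2(T-H)(H+2)\big)$.

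Collecting everything gives $\bE[\cR_T]\le\cO(MH)+\epsilon^{-1}\cO(\log n+1)+\cO\big(\epsilon M^2(T-H)(H+2)\big)$, and optimizing this convex trade-off over $\epsilon$ --- which is precisely the prescribed $\epsilon=\sqrt{4(\log n+1)/(M^2(T-H)(2+H))}$ --- yields the announced $5MH+4M\sqrt{(T-H)(H+2)(\log n+1)}$. I expect the real work to be: (a) pinning down (or citing precisely) the one-step FTPL stability estimate for exponential perturbations, since this is what controls the drift term; and (b) the careful bookkeeping of the $\cO(1)$ constants and of the boundary rounds at both ends of the horizon --- it is this bookkeeping that turns the idealized $H$ into $H+2$ and produces the explicit $5MH$. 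The reduction to FTPL and the Be-the-Leader step are routine given the machinery of Section~\ref{sec:FTL}.
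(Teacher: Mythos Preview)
Your proposal is correct, but it takes a genuinely different route from the paper's own proof.

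The paper proceeds by invoking Theorem~\ref{thm:FTAPL_regret_bd} verbatim, obtaining the three-term decomposition
\[
\bE[\cR_T]\le \underbrace{\sum_t\bE\langle g^t,x^t-x^{t+1}\rangle}_{(A)}
+\underbrace{\sum_t\bE[R^t(x^{t+2})-R^t(x^{t+1})]}_{(B)}
+\underbrace{\bE[R^T(x^*)-R^T(x^{T+1})]}_{(C)},
\]
and then works through the explicit form of the adaptive regularizer $R^t$ inside $(B)$ and $(C)$. The key technical tools are an $H$-step leader-stability bound, $\bP(x^t\neq x^{t+1})\le \epsilon MH$ (Claim~\ref{cla:prob_leader_change}), and a ``difference in leader value'' estimate (Proposition~\ref{prop:diff_leader_value}) controlling $|G^{t-1}_{i^{t-H}}-Z_{i^{t-H}}-G^{t-1}_{i^{t}}+Z_{i^{t}}|$ over a window. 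These combine to give $(A)+(B)+(C)\le 4MH+\delta T+\epsilon M^2(2+H)(T-H)+\frac{4}{\epsilon}(\log n+1)$, after which the stated $\epsilon$ and a choice of $\delta$ finish the job.

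Your argument instead sidesteps Theorem~\ref{thm:FTAPL_regret_bd} and the regularizers entirely: you pivot on $v_*^{t+1}$, handle $\sum_t\langle g^t,v_*^{t+1}-x'\rangle$ via Be-the-Leader (i.e.\ Theorem~\ref{thm:FTL_regret_bd} applied to the loss sequence with $-\langle Z,\cdot\rangle$ prepended), and control the drift $x^t-v_*^{t+1}$ by the Abel-type telescoping $\frac1H\sum_{m=1}^H m(v_*^{t-H+m}-v_*^{t-H+m+1})$ together with the \emph{one}-step stability $\bP(v_*^s\neq v_*^{s+1})\le \epsilon M$. This is more elementary---you never touch $R^t$ or Proposition~\ref{prop:diff_leader_value}---and, carried through, actually yields slightly smaller constants than the statement (your $\epsilon^{-1}$ appears once rather than four times). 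What the paper's route buys is a demonstration that the FTARL machinery of Section~\ref{sec:FTL} \emph{itself} delivers the bound, which is the methodological point the authors are making; your route shows the same bound follows from FTPL stability plus averaging without that abstraction.
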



\begin{proof}
    Let $x^*$ be the best-in-hindsight decision. For a general $\epsilon$, invoking Theorem \ref{thm:FTAPL_regret_bd} gives us  
   \begin{flalign}
        \bE[\cR_T]  \leq \underbrace{\sum_{t=1}^T \bE[\inp{g^t}{x^t - x^{t+1}}]}_{(A)} +\underbrace{\sum_{t=0}^{T-1} \bE\big[R^t(x^{t+2}) - R^t(x^{t+1})\big]}_{(B)}\notag
        +  \underbrace{\bE\big[R^T(x^*)-R^T(x^{T+1})\big].}_{(C)}   \notag
    \end{flalign}
    Using bounds for terms $(A)$,$(B)$ and $(C)$ obtained in Appendix \ref{app:term_bd} we have 
    \begin{flalign}
     \bE[\cR_T]  \leq & 4MH + \delta T+ \epsilon M^2 (2+H) (T-H) +  \bE\big[\big\langle Z, x^{H+1}- x^* \rangle\big] + \bE\big[\big\langle Z, x^{1}- x^2 \rangle\big]. \label{eq:regret_intermidiate} 
    \end{flalign} 
    The last two terms in the RHS above can further be upperbounded as
    \begin{flalign*}
    \bE\big[\big\langle Z, x^{H+1}- x^* \rangle\big] + \bE\big[\big\langle Z, x^{1}- x^2 \rangle\big] & \leq \bE\big[\big\|Z\|_\infty, \|x^{H+1}\|_1+ \|x^*\|_1+\|x^{1}\|_1+ \|x^2\|_1 \rangle\big] \\
    & \stackrel{(a)}{=}  \frac{4 H_n}{\epsilon} \leq \frac{4}{\epsilon} (\log(n)+1) 
    \end{flalign*}
    where equality $(a)$ results because the n-th order statistic of $n$ i.i.d exponential RVs with parameter $\epsilon$ has expected value $\frac{H_n}{\epsilon}$ where $H_n$ is the n-th harmonic number.
    Substituting this result back in (\ref{eq:regret_intermidiate}) and  picking the provided $\epsilon$ and $\delta=MH/T$ gives us our result. 
\end{proof}

Some comments on our result are in order. First, we note that our regret bound is linear in $H$. This was expected and agrees with the $\Omega(H)$ regret we obtained for the example in Section \ref{sec:counter_ex}. Accordingly, we began this section with the underlying assumption that $H\in o(T)$ which indicates that our regret bound is $\cO(M\sqrt{TH \log(n)})$. Indeed when $H=1$ our problem reduces to the Expert Advice problem and we recover the well established and tight $\cO(M\sqrt{T\log(n)})$ regret bound [\cite{cesa2006prediction}]. 

Recall that the notion of regret we consider compares the performance of our algorithm with that of the best-in-hindsight decision under no history dependence as highlighted in Section \ref{sec:setup}. The bounds in Theorem \ref{thm:main_result} then indicate that in reconciling with this history dependence the penalty we pay appears as a multiplicative factor that scales as $\sqrt{H}$. 



Finally, note that in Theorem \ref{thm:main_result}, the value of $\epsilon$ we picked depended explicitly on the $H$ which translates to our algorithm (\ref{eq:FTAPL_alg}) also depending on $H$. But knowledge of $H$ may be an unrealistic assumption in many cases - in the illustrative scenario from Section \ref{sec:intro} for instance, the media house may not have an estimate of the length of memory amongst their consumers. Fortunately, it suffices to know a suitable upper bound $\Theta$ on $H$ to make the following 

\begin{coro} \label{cor:upperbound_on_H}
    Running algorithm (\ref{eq:FTAPL_alg}) with regularizers defined in (\ref{eq:FTAPL_reg}) (or, equivalently, Algorithm \ref{alg:FTARL}) with $\epsilon=\sqrt{\frac{4 (\log(n) + 1)}{T M^2\Theta}}$  allows us to bound the expected regret as 
    \begin{flalign*}
        & \bE[\cR_T] \leq 5M \Theta +  4 M \sqrt{T\Theta(\log(n)+1)}.
    \end{flalign*} 
    where the expectation is taken over the distribution of $Z$ employed in (\ref{eq:FTPL_seq}) and $M$ is a bound on $\|g^t\|_\infty$.
\end{coro}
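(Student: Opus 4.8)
The plan is to reuse the proof of Theorem~\ref{thm:main_result} essentially verbatim up to the point where the learning rate is fixed, and only then make an $H$-free choice of parameters. The key observation is that inequality~(\ref{eq:regret_intermidiate}), together with the subsequent estimate $\bE[\langle Z,x^{H+1}-x^*\rangle]+\bE[\langle Z,x^1-x^2\rangle]\le\frac{4}{\epsilon}(\log(n)+1)$, holds for an \emph{arbitrary} learning rate $\epsilon>0$ and an arbitrary strong-convexity parameter $\delta>0$; it is only the final line of that proof that specializes these two scalars. Thus, for every $\epsilon,\delta>0$,
\begin{flalign*}
 \bE[\cR_T]\ \le\ 4MH+\delta T+\epsilon M^2(2+H)(T-H)+\frac{4}{\epsilon}\big(\log(n)+1\big).
\end{flalign*}

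The second step is to relax away the dependence on the unknown $H$ using $H\le\Theta$. Keeping $\delta=MH/T$ exactly as in Theorem~\ref{thm:main_result} gives $4MH+\delta T=5MH\le 5M\Theta$; and since $T-H\le T$ while $2+H\le\Theta$ for any suitable upper bound (e.g.\ $\Theta\ge H+2$), we have $(2+H)(T-H)\le\Theta T$. Hence
\begin{flalign*}
 \bE[\cR_T]\ \le\ 5M\Theta+\epsilon M^2\Theta T+\frac{4}{\epsilon}\big(\log(n)+1\big),
\end{flalign*}
whose right-hand side depends only on quantities the learner knows. Balancing the two $\epsilon$-terms — which is exactly what $\epsilon=\sqrt{4(\log(n)+1)/(TM^2\Theta)}$ achieves — makes each of them equal to $2M\sqrt{T\Theta(\log(n)+1)}$, and the claimed bound $\bE[\cR_T]\le 5M\Theta+4M\sqrt{T\Theta(\log(n)+1)}$ follows.

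I do not anticipate a real obstacle here; the one point that needs care is bookkeeping \emph{which} quantities must be chosen without knowledge of $H$. The FTPL update in line~3 of Algorithm~\ref{alg:FTARL} is $H$-agnostic, the state update in line~4 is imposed by the environment rather than chosen by the learner, and the adaptive regularizers~(\ref{eq:FTAPL_reg}) — including the $H$-dependent $y^t$ and $\beta^{t+1}$ — enter only the analysis, never the implementation (indeed the term $-\langle G^t,x\rangle$ in $R^t$ cancels $\sum_{\tau\le t}l^\tau$, so the minimizer $x^{t+1}$, and hence the algorithm, does not even depend on $\delta$). Consequently the sole algorithmic parameter is $\epsilon$, and substituting $\Theta$ for $H$ there while still using the true $H$ in the analysis and in $\delta$ is legitimate. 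The caveat inherited from Theorem~\ref{thm:main_result} is that $\Theta\in o(T)$ is needed for the resulting $\cO\big(M\sqrt{T\Theta\log(n)}\big)$ bound to be sublinear in $T$.
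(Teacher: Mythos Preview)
Your proposal is correct and matches the paper's intended argument: the paper states the corollary without an explicit proof, presenting it as the ``simple extension'' obtained by rerunning the proof of Theorem~\ref{thm:main_result} through~(\ref{eq:regret_intermidiate}) and then choosing the $H$-free learning rate. Your additional bookkeeping---that $\delta$ and the regularizers~(\ref{eq:FTAPL_reg}) are purely analytical while $\epsilon$ is the only algorithmic parameter---is accurate and more explicit than the paper; the only minor looseness is the step $(2+H)(T-H)\le \Theta T$, which as you note requires $\Theta\ge H+2$ rather than merely $\Theta\ge H$, but the paper's phrasing (``a \emph{suitable} upper bound $\Theta$'' and ``$H<\Theta$'') is equally loose on this point.
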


\section{CONCLUSIONS \& FUTURE WORK} 

In this paper, we considered a specific case of online decision making where the stage costs depend on the average of past decisions. By converting it into a online learning problem with stage-wise constraints, we were able to apply our novel FTARL algorithm to obtain tight regret bounds for this problem. The success of our approach in being able to handle constraints in online learning problems brings up a natural direction of future work. It remains to be seen how our algorithm can be adapted to handle constraints that crop up in other online learning scenarios. 

We also restricted ourselves to learning decisions that lie in a simplex with signed cost functions. Broadening our scope to include general convex decision set and non-linear convex cost function is an important avenue for future work. In our work, we also weigh every decision in the finite history horizon equally. However, in many realistic scenarios, past decisions may have less influence on the current cost compared to the current decision. Exploring weighted averaging within our framework is another interesting line of study to explore.     





\appendix
\section{Some useful notation, lemmas and identities}
Recalling that in our algorithm $v_*^t$ are generated according to (\ref{eq:FTPL_seq}), which is a linear program, every $v_*^t$ must lie at the vertex of the constraint set $\bbDelta_n$. Defining, $i^t\triangleq argmin_i \{G^t_i-Z_i\}$ we can then claim $v^t_* = e_{i_{t-1}}$ where $e_j$ is the $j^{th}$ basis vector in $\bR^n$. We now list some identities and results.

\begin{iden} \label{id:diff_state}
    For decisions $\{x^t\}_{t=1}^T$ made according to (\ref{eq:FTAPL_alg}) with regularizers defined in (\ref{eq:FTAPL_reg}) we have
    \begin{flalign}
        && x^{t+1} - x^{t} & \stackrel{a.s}{=} \frac{1}{H} (e_{i_{t}} - e_{i_{t-H}}) \tag{I.1} \quad \forall t\geq H \label{id:del_x_after_h} & \\
        && x^{t+1} - x^{t} & \stackrel{a.s}{=} \frac{1}{t+1} (e_{i_{t}} - x^t) \tag{I.2} \quad \forall t<H \label{id:del_x_before_h} & \\
        \text{which in turn gives us} && \quad  \|x^{t+1} - x^{t}\|_1 &\leq \frac{2}{\min\{t,H\}}. &\tag{I.3} \label{id:diff_x_bd}
    \end{flalign}
\end{iden}

\begin{claim} \textbf{(Probability of change in leader across $H$ steps.)} \label{cla:prob_leader_change}
For $t\geq H$,
$$\bP(x^t \neq x^{t+1}) \leq \epsilon m H.$$  
\end{claim}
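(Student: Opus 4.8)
The plan is to translate the statement into a claim about the stability of the perturbed‑leader index $i_s=\argmax_i\{Z_i-G^s_i\}$ and then sum a one‑round bound over the $H$ rounds spanned by the averaging window. Fix $t\ge H$. By the first relation in Identity~\ref{id:diff_state} (the one valid for $t\ge H$), $x^{t+1}-x^t\stackrel{a.s.}{=}\frac1H(e_{i_t}-e_{i_{t-H}})$, so up to a null event $\{x^t\ne x^{t+1}\}=\{i_t\ne i_{t-H}\}$ and it suffices to bound $\bP(i_t\ne i_{t-H})$. Viewing $i_{t-H},i_{t-H+1},\dots,i_t$ as a chain, if every consecutive pair agrees then the endpoints agree, hence $\{i_t\ne i_{t-H}\}\subseteq\bigcup_{s=t-H}^{t-1}\{i_s\ne i_{s+1}\}$ and, by a union bound, $\bP(i_t\ne i_{t-H})\le\sum_{s=t-H}^{t-1}\bP(i_s\ne i_{s+1})$.

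The heart of the argument is the single‑round estimate $\bP(i_s\ne i_{s+1})\le\epsilon m$ for every $s\ge0$. Note that $i_{s+1}=\argmax_i\{Z_i-G^s_i-g^{s+1}_i\}$, so passing from round $s$ to round $s+1$ adds to the $i$‑th score the nonnegative amount $-g^{s+1}_i\in[0,m]$ (here we use $g^{s+1}\in\bR^n_{\le0}$ and $\|g^{s+1}\|_\infty\le m$). Consequently, if at round $s$ the winning score beats every other score by strictly more than $m$, the winner cannot change at round $s+1$: the other scores rise by at most $m$, while the winner's score does not fall. Fix an index $j$, condition on $(Z_i)_{i\ne j}$, and let $u$ be the (now deterministic) largest of the scores $\{Z_i-G^s_i:i\ne j\}$. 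Memorylessness of the exponential yields $\bP(Z_j-G^s_j>u+m\mid(Z_i)_{i\ne j})\ge e^{-\epsilon m}\,\bP(Z_j-G^s_j>u\mid(Z_i)_{i\ne j})$, a comparison that holds with equality when $u+G^s_j\ge0$ and, after a one‑line case check, remains valid when the event $\{Z_j>u+G^s_j\}$ is almost sure. Since the left‑hand event forces $i_s=j=i_{s+1}$, taking expectations over $(Z_i)_{i\ne j}$ gives $\bP(i_s=j,\ i_{s+1}=j)\ge e^{-\epsilon m}\,\bP(i_s=j)$; summing over $j$ gives $\bP(i_s=i_{s+1})\ge e^{-\epsilon m}$, i.e.\ $\bP(i_s\ne i_{s+1})\le1-e^{-\epsilon m}\le\epsilon m$.

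Chaining the two bounds gives $\bP(x^t\ne x^{t+1})\le\sum_{s=t-H}^{t-1}\epsilon m=\epsilon m H$. I expect the single‑round estimate to be the only real obstacle: one must handle the case where the conditioning interval for $Z_j$ is degenerate, confirm that ties among the $n$ perturbed scores occur with probability zero (so $i_s$ is a.s.\ well defined and the strict inequalities are harmless), and check that nothing breaks for the first window, where $s$ can equal $0$ and $G^0=\mathbf 0$. The remaining steps — the identity‑based reduction and the union bound — are routine bookkeeping.
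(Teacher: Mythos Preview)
Your proof is correct. The paper takes a slightly more direct route: rather than chaining $H$ single-step leader changes and applying a union bound, it bounds $\bP(i^t=j\mid i^{t-H}=j)\ge e^{-\epsilon MH}$ in one shot, invoking the same memorylessness argument but with the cumulative cost shift $MH$ in place of your single-round $m$. Summing over $j$ then gives $\bP(i^t=i^{t-H})\ge e^{-\epsilon MH}$ and hence $\bP(x^t\neq x^{t+1})\le 1-e^{-\epsilon MH}\le\epsilon MH$. Both arguments rest on the identical key ingredient (memorylessness of the exponential perturbation) and deliver the same final bound; the paper's intermediate quantity $1-e^{-\epsilon MH}$ is in fact marginally tighter than your $H(1-e^{-\epsilon m})$ (since $1-y^H\le H(1-y)$ for $y\in(0,1]$), but the gap vanishes after linearization. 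Your decomposition has the modest advantage of isolating the one-round FTPL stability estimate as a standalone, reusable lemma.
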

\begin{proof}\textbf{:}
By Identity \ref{id:del_x_after_h} and law of total probability we have
\begin{flalign}
    \bP(x^t = x^{t+1}) = \sum_{j=1}^n \bP(i^t = j| i^{t-h} = j) \bP(i^{t-h} = j) \geq \exp{(-\epsilon MH)}. 
\end{flalign}
where the lower bound $\bP(i^t = j| i^{t-h} = j)\geq \exp{(-\epsilon MH)}$ is obtained by a procedure very similar to the one taken for the Expert Advice Problem (Theorem 5.10) in \cite{hazan2022introduction}. This gives us 
$$\bP(x^t \neq  x^{t+1}) \leq 1- \exp{(-\epsilon MH)} \leq \epsilon MH.$$
\end{proof}

\begin{prop}\textbf{(Difference in value of leader.)} \label{prop:diff_leader_value} 
Consider any time window $\cW = [t-\Theta,t]$ of length $\Theta$ and let $\cL^t$ denote the set of \textit{leaders} for this time window, i.e. 
$$\cL^t=\{l\in [n]|l = i_\tau \text{ for some } \tau \in \cW\}.$$
When $|\cL^t|\geq 2$, for all $j,k \in \cL^t$ we have
\begin{flalign*}
    |G_j^\tau -Z_j - (G_k^\tau -Z_k)| \leq M\Theta  \quad \forall \tau\in\cW. 
\end{flalign*}
\end{prop}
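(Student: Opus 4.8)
The plan is to read off, at the two witness times where $j$ and $k$ are leaders, the optimality inequalities defining $i_\tau = \argmin_i\{G^\tau_i - Z_i\}$, and then to combine them with an elementary Lipschitz-in-$\tau$ estimate on the gap $(G^\tau_j - Z_j) - (G^\tau_k - Z_k)$.

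First I would fix $j,k\in\cL^t$. The case $j=k$ is trivial, so assume $j\neq k$; this is where the hypothesis $|\cL^t|\geq 2$ is used (merely to guarantee distinct indices exist). By definition of $\cL^t$ there are times $\tau_j,\tau_k\in\cW$ with $i_{\tau_j}=j$ and $i_{\tau_k}=k$. Introduce the scalar $\phi(\tau)\triangleq (G^\tau_j - Z_j) - (G^\tau_k - Z_k)$. Since $i_{\tau_j}=\argmin_i\{G^{\tau_j}_i - Z_i\}$, comparing coordinate $j$ with coordinate $k$ gives $\phi(\tau_j)\leq 0$, and symmetrically $\phi(\tau_k)\geq 0$.

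Next I would establish the Lipschitz bound: for $\tau,\tau'\in\cW$ with $\tau>\tau'$, $\phi(\tau)-\phi(\tau') = \sum_{s=\tau'+1}^{\tau}(g^s_j - g^s_k)$, and since each $g^s$ is non-positive with $\|g^s\|_\infty\leq M$ we have $g^s_j - g^s_k\in[-M,M]$, hence $|\phi(\tau)-\phi(\tau')|\leq M|\tau-\tau'|\leq M\Theta$ as $\cW$ has length $\Theta$. Combining the pieces: for any $\tau\in\cW$, $\phi(\tau)\leq \phi(\tau_j) + M\Theta \leq M\Theta$ and $\phi(\tau)\geq \phi(\tau_k) - M\Theta \geq -M\Theta$, so $|\phi(\tau)|\leq M\Theta$, which is exactly the claimed bound $|G^\tau_j - Z_j - (G^\tau_k - Z_k)|\leq M\Theta$ for all $\tau\in\cW$.

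I do not anticipate a genuine obstacle here: the argument is short and self-contained. The only points requiring care are associating each leader index with a valid witness time lying inside $\cW$, getting the sign conventions of the two optimality inequalities right (so that one gives an upper bound and the other a lower bound on $\phi$), and noting that $|g^s_j-g^s_k|\le M$ (not $2M$) because the $g^s$ are sign-definite. None of these is delicate.
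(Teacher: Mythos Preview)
Your proof is correct and follows essentially the same approach as the paper: pick witness times $\tau_j,\tau_k\in\cW$, use the optimality inequalities there to pin down the sign of $\phi$, and extend to all $\tau\in\cW$ via a Lipschitz-in-time estimate that exploits the sign-definiteness of $g^s$ to get the constant $M$ rather than $2M$. The only cosmetic difference is that the paper phrases the last step as ``only one of $G_j^\tau-G_j^{\tau_j}$ and $G_k^{\tau_j}-G_k^\tau$ can be positive since $G_i^t$ is decreasing in $t$,'' whereas you write it as $|g^s_j-g^s_k|\le M$ because $g^s_j,g^s_k\in[-M,0]$; these are equivalent observations.
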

\begin{proof}
By definition of $\cL^t$ we can find time steps $\tau_j,\tau_k\in \cW$ such that 
\begin{flalign}
    G_j^{\tau_j} - Z_j - G_k^{\tau_j} + Z_k \leq 0 \label{eq:pf_j_leader}\\
    G_k^{\tau_k} - Z_k - G_j^{\tau_k} + Z_j \leq 0 \label{eq:pf_k_leader}.
\end{flalign}
For any $\tau \in \cW$, using (\ref{eq:pf_j_leader}) we have,
\begin{flalign*}
    G_j^{\tau} - Z_j - G_k^{\tau} + Z_k  \leq \underbrace{G_j^{\tau} - G_j^{\tau_j}}_{(a)} + \underbrace{G_k^{\tau_j} - G_k^{\tau}}_{(b)} \leq M|\tau-\tau_j| \leq M\Theta
\end{flalign*}
Note that in the expression above only one of $(a)$ and $(b)$ can be positive as $G^t_i$ is decreasing in $t$ for every $i$. Both of them are independently upper-bounded by $M|\tau-\tau_j|$ since $\|g^t\|\leq M$ for all $t$.  A similar argument can be used to show 
$G_k^{\tau} - Z_k - G_j^{\tau} + Z_j \leq M \Theta$ for every $\tau \in \cW$ using (\ref{eq:pf_k_leader}). This gives us the required result. 
\end{proof}

Finally, we provide a justification for Remark \ref{rem:FTARL_gives_dyn}. 

\begin{proof}\textbf{for Remark \ref{rem:FTARL_gives_dyn}} \\
    By definition of $v^1_*$ in (\ref{eq:FTPL_seq}), we have $v^1_* \in \argmin_{x \in \bbDelta_n} R^0(x)$. By picking $x^1 = v^1_*$ and noting that $y^0=0$ and $\beta^1=1$ we see that $x^1$ satisfies the relation in (\ref{eq:constrained_action}) for $t=1$. For $t>1$ we have 
    $$x^{t+1} \in \argmin_{x\in \bbDelta_n} R^t(x) + \langle G^t, x \rangle {=}  \argmin_{x\in \bbDelta_n} \delta\bigg(\frac{1}{2} \|x-y^t\|^2_2 -  \beta^{t+1} \langle v^{t+1}_* ,x\rangle\bigg) = \{ y^t + \beta^{t+1} v_*^{t+1}\}. $$
    The last equality was obtained by taking gradient of the objective with respect to $x$ and setting it equal to zero. Note that $y^t + \beta^{t+1} v_*^{t+1}$ lies in $\bbDelta_n$ by definition of $y^t$, $\beta^{t+1}$ and $v_*^{t+1}$ so it is indeed a minimizer. Thus, the decision $x^{t+1}$ picked by our algorithm satisfies the condition in (\ref{eq:constrained_action}) for all $t\geq1$.  
\end{proof}


\section{Bounding terms (A), (B) and (C)} \label{app:term_bd}
\subsection{Bounding term (A)}
\begin{flalign*}
    & \sum_{t=1}^T \bE[\inp{g^t}{x^t - x^{t+1}}] \rangle \\ & \stackrel{(a)}{\leq}  \sum_{t=1}^{H} \bE[\|g^t\|_\infty \|x^t - x^{t+1}\|_1] + \sum_{t=H+1}^{T} \bE\big[\inp{g^t}{x^t - x^{t+1}}\big|x^t\neq x^{t+1}\big] \bP(x^t\neq x^{t+1}) \\
    & \stackrel{(b)}{\leq} 2 M H  
 + \sum_{t=H+1}^{T} \bE\big[\|g^t\|_\infty\|x^t - x^{t+1}\|_1\big|x^t\neq x^{t+1}\big] \bP(x^t\neq x^{t+1}) \\
    & \stackrel{(c)}{\leq}  2MH + \sum_{t=H+1}^{T} \bigg(M\frac{2}{H}\bigg) \epsilon M H = 2MH + 2 (T-H) \epsilon M^2
\end{flalign*}
where $(a)$ results from H\"older's inequality and the law of total expectation. Inequality $(b)$ results from the boundedness of $g^t$, the fact that $x^t$ lies in the n-simplex and H\"older's inequality. Inequality $(c)$ results from the boundedness of $g^t$, identity \ref{id:diff_x_bd} and Claim \ref{cla:prob_leader_change}. 
\subsection{Bounding term (B)} \label{app:bound_C}
Dropping the first term we have
\begin{flalign}
    & \sum_{t=1}^{T-1} \bE\big[R^t(x^{t+2}) - R^t(x^{t+1})\big] \notag\\
     = & \sum_{t=1}^{T-1} \bE\bigg[\frac{\delta}{2} \big(\|x^{t+2}- y^t\|^2-\|x^{t+1}- y^t\|^2\big) - \delta\beta^{t+1} \langle v^{t+1}_*,x^{t+2}-x^{t+1} \rangle - \langle G^t,x^{t+2}-x^{t+1}\rangle \bigg] \notag \\
    \stackrel{(a)}{=} & \sum_{t=1}^{T-1} \bE\bigg[\frac{\delta}{2} \langle x^{t+2} - x^{t+1}, x^{t+2} + x^{t+1} - 2 y^t \rangle \notag  -  \delta \beta^{t+1} \langle v^{t+1}_*,x^{t+2}-x^{t+1} \rangle - \langle G^t,x^{t+2}-x^{t+1}\rangle \bigg] \notag \\ 
    \stackrel{(b)}{=} & \sum_{t=1}^{T-1} \bE\bigg[\frac{\delta}{2} \|x^{t+2} - x^{t+1}\|^2_2  - \langle G^t,x^{t+2}-x^{t+1}\rangle \bigg] \stackrel{(c)}{\leq} \delta (T-1) - \sum_{t=1}^{T-1} \bE\bigg[\langle G^t,x^{t+2}-x^{t+1}\rangle \bigg] \label{eq:bounding_C_b4_split}
\end{flalign}
where $(a)$ results from using the identity $\|\mu\|_2^2 - \| \nu\|^2_2 = \langle \mu-\nu,\mu+\nu\rangle $. Equality $(b)$ results simply from (\ref{eq:constrained_action}) and inequality $(c)$ results from noting that the diameter of a simplex is $\sqrt{2}$.  
Then we can bound the second term in the RHS of (\ref{eq:bounding_C_b4_split}) until $t=H$ as, 
\begin{flalign*}
    \sum_{t=1}^{H-1} \bE\bigg[\langle G^t,x^{t+1}-x^{t+2}\rangle \bigg]  
    \stackrel{(d)}{\leq} \sum_{t=1}^{H-1} \bE\bigg[\|G^t\|_\infty \|x^{t+2}-x^{t+1}\|_1 \bigg] \stackrel{(e)}{\leq}   \sum_{t=1}^{H-1} \frac{Mt}{t+1} \stackrel{}{\leq} MH 
\end{flalign*}
where $(d)$ results from employing H\"older's inequality, inequality $(e)$ results from applying Identity \ref{id:diff_x_bd} and from the boundedness of $g^t$.  We now obtain an upper-bound on the remaining terms in the RHS of (\ref{eq:bounding_C_b4_split}). Performing a change of index and adding and subtracting bold faced terms we have 
\begin{flalign*}
     & \sum_{t=H+1}^{T}\bE\bigg[ - \langle G^{t-1}\mathbf{-Z},x^{t+1}-x^{t}\rangle \bigg] + \mathbf{\bE\bigg[\bigg\langle Z, \sum_{t=H+1}^{T} x^{t}-x^{t+1}\bigg\rangle\bigg]} \\
     \stackrel{(f)}{=} & \sum_{t=H+1}^{T} \bE\bigg[- \langle G^{t-1}-Z,x^{t+1}-x^{t}\rangle \bigg| x^{t+1}\neq x^{t} \bigg] \bP(x^{t+1}\neq x^{t}) + \bE[\langle Z,  x^{H+1}-x^{T+1}\rangle]\\
     \stackrel{(g)}{\leq} &\sum_{t=H+1}^{T} \frac{1}{H}\bE\big[ G^{t-1}_{i^{t-H}}-Z_{i^{t-H}} - G^{t-1}_{i^{t}}+Z_{i^{t}}\big|i_{t}\neq i_{t-H} \big]\bP(x^{t+1}\neq x^{t}) + \bE[\langle Z,  x^{H+1}-x^{T+1}\rangle]\\
     \stackrel{(h)}{\leq} &\sum_{t=H}^{T-1} M \bP(x^{t+1}\neq x^{t+2}) + \bE[\langle Z,  x^{H+1}-x^{T+1}\rangle] \stackrel{(i)}{\leq} \epsilon M^2 H(T-H)  + \bE[\langle Z,  x^{H+1}-x^{T+1}\rangle]
\end{flalign*}
where $(f)$ results from applying the law of total expectation and simplifying a telescoping series. Inequality $(g)$ results from identity \ref{id:del_x_after_h}. Inequality $(h)$ results from noting that $i^{t},i^{t-H} \in \cL^{t}$ in applying Proposition \ref{prop:diff_leader_value} over the window $[t-H,t]$ and $(i)$ results from Claim \ref{cla:prob_leader_change}.  Accounting for the first term we dropped earlier gives us the following upper bound on term (C), 
\begin{flalign}
    \text{(C)} \leq &  \delta T + MH + \epsilon M^2 H (T-H) + \bE[\langle Z,  x^{H+1}-x^{T+1}\rangle]+\bE[\langle Z,  x^{1}-x^{2}\rangle]. \notag
\end{flalign}

\subsection{Bounding term (C)}
\begin{flalign*}
    \bE\big[R^T(x^*)-R^T(x^{T+1})\big] \stackrel{(a)}{=} & `\bE\bigg[\frac{\delta}{2} \|x^* - x^{T+1}\|^2_2  - \langle G^T,x^*-x^{T+1}\rangle \bigg] \\
    \stackrel{(b)}{\leq} & \delta - \bE\big[\big\langle G^T-Z, x^* - x^{T+1} \rangle\big] - \bE\big[\big\langle Z, x^* - x^{T+1} \rangle\big]   
\end{flalign*}
where $(a)$ was obtained similarly to the initial steps in Section \ref{app:bound_C}. Inequality $(b)$ was obtained by noting that the diameter of a probability simplex is $\sqrt{2}$ and by adding and subtracting terms. Let us define $i^* = \argmin_i \{G^T_i\}$ allowing us to redefine $x^*=e_{i^*}$. Then, employing $(\ref{eq:aggregate_action})$ and adding and subtracting $e_{i^T}$ we have 
\begin{flalign*}
    -\bE\big[\big\langle G^T-Z, x^* - x^{T+1} \rangle\big] = & \frac{1}{H} \sum_{t=T-H+1}^T \bE\big[\big\langle G^T-Z, e_{i_T} - e_{i^*} + e_{i_t}  - e_{i_T} \rangle\big] \\
    \stackrel{(c)}{\leq} & \frac{1}{H}\sum_{t=T-H+1}^T \bE\big[G^T_{i_t}-Z_{i_t}-G^T_{i_T}+Z_{i_T} \big] \stackrel{(d)}{\leq}  MH 
\end{flalign*}
where $(c)$ results from noting that $\langle G^T-Z, e_{i_T} \rangle = \min_{x\in \bbDelta} \langle G^T-Z, x \rangle \leq \langle G^T-Z, e_{i^*} \rangle$. Inequality $(d)$ results from noting that $i^{t},i^{T} \in \cL^{T}$ in applying Proposition \ref{prop:diff_leader_value} over the window $[T-H,T]$. Putting everything together we have the bound,
$$\bE\big[R^T(x^*)-R^T(x^{T+1})\big] \leq \delta +M H +  \bE\big[\big\langle Z, x^{T+1}- x^* \rangle\big]. $$

\section{Experimental Results}

We now present some simulation results where we apply our \textit{Follow-The-Adaptively-Regularized-Leader} (FTARL) algorithm to learn from synthetically generated data. For performance comparison, we also implement the \textit{low-switch algorithm} (LSA) for OCO-M by \cite{Anava2015OnlineMistakes}. 
\subsection{Methods}
For the simulation, we synthetically generate three classes of cost vectors $\{g^t\}_{t=1}^T$. 
\begin{enumerate}
    \item \textit{Identical Stochastic Cost Functions} \textbf{(StocId)} - At every time step the cost of each action is picked independently and uniformly from $[0,1]$ i.e. $g_i^t \stackrel{i.i.d}{\sim} U[0,1]$ for all $t$. 
    \item \textit{Heterogenous Stochastic Cost Functions} \textbf{(StocHet)} - Once at the beginning, an interval  $[a_i,b_i] \subseteq [0,1]$ is randomly generated for every action $i$ and the cost for that action is picked uniformly at random from this interval for every time step. In other words, $g_i^t \stackrel{i.i.d}{\sim} U[a_i,b_i]$ for all $t$
    \item \textit{Cyclic Cost Functions} \textbf{(Cyc)} - From a finite list of cost vectors one is picked in a cyclical pattern. Each picked cost vector is held constant for a fixed period $L$ before moving to the next. In our simulations, this list of cost functions is simply the set 
    $$\{-e_i | i\in [n], e_i \text{ is a standard basis vector in $\bR^n.$}  \}.$$    
\end{enumerate}

\begin{figure}
    \centering
    \subfigure{\includegraphics[width=0.49\textwidth]{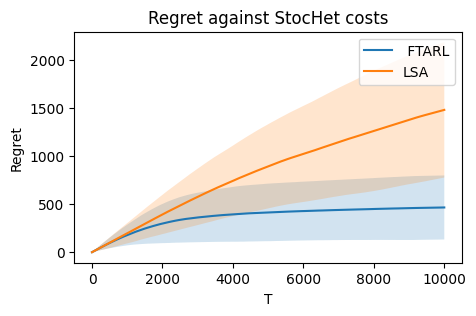} \label{fig:stocHet_LSA}} 
    \subfigure{\includegraphics[width=0.49\textwidth]{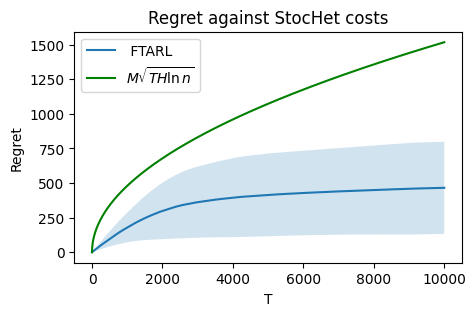}
    \label{fig:stocHet_BD}}
    \subfigure{\includegraphics[width=0.49\textwidth]{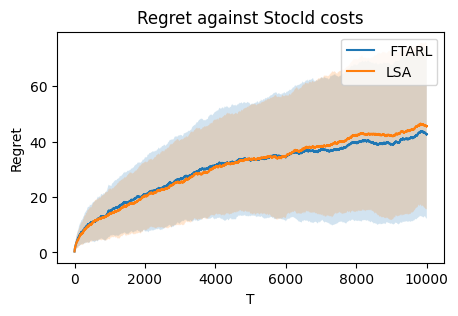} \label{fig:stocID_LSA}} 
    \subfigure{\includegraphics[width=0.49\textwidth]{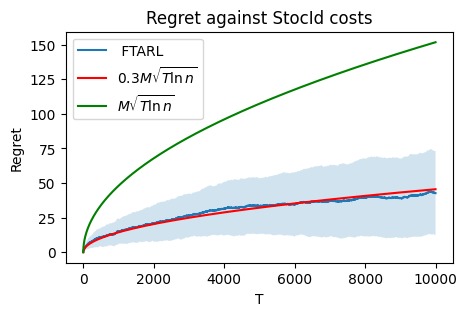} \label{fig:stocID_BD}}
    \subfigure{\includegraphics[width=0.49\textwidth]{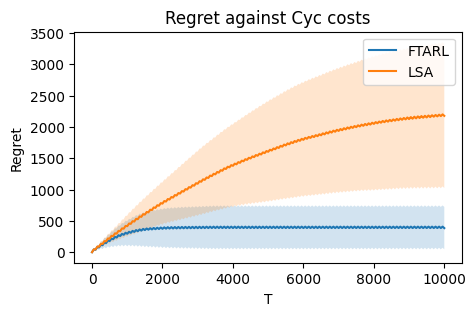} \label{fig:cyc_LSA}} 
    \subfigure{\includegraphics[width=0.49\textwidth]{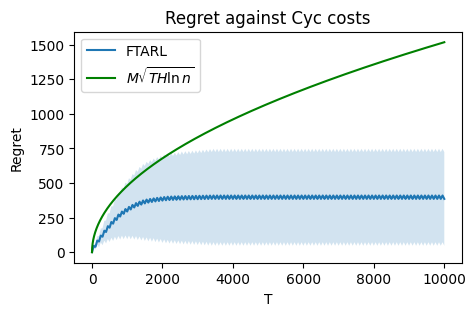} \label{fig:cyc_BD}} 
    \caption{\textbf{(Experimental results on synthetic data.)} 
     Figures 1(a) and 1(b) showcase the performance of our algorithm against \textbf{StocHet} cost functions. Figure 1(a) compares the average regret incurred by our algorithm against that incurred by the LSA algorithm by \cite{Anava2015OnlineMistakes}, while Figure 1(b) compares it with a term similar to the regret bound obtained in Theorem \ref{thm:main_result}. Figures 1(c),(d) and 1(e),(f) show similar comparisons for \textbf{StocId} and \textbf{Cyc} cost functions respectively.}
    \label{fig:regret_comparison}
\end{figure}

We run all our simulations (source code at github.com/vijeth27/LearningWithHist.git) for $T=10000$ learning horizon and the length of history horizon over which decisions are averaged is fixed at $H=100$. Note that this history horizon is of the order $\sqrt{T}$. In the event we are generating the cost functions in a cyclic manner (Cyc) we set $L=50$. Finally, we run $S=100$ iterations with each class of cost function and the average regret incurred by both FTARL and LSA are plotted in Figure \ref{fig:regret_comparison}. 

\subsection{Results}

We see in Figures \ref{fig:regret_comparison}(a,e) that our FTARL algorithm out-performs the LSA in the case of both \textbf{StocHet} and \textbf{Cyc} cost functions. This weaker performance of LSA parallels the weaker $\cO(\sqrt{T \log T})$ regret guarantees (compared to our $\cO(\sqrt{T})$ bound) obtained for it by \cite{Anava2015OnlineMistakes}. But the stronger performance of FTARL algorithm should not come as a surprise as our approach explicitly accounts for the specific averaging nature of history dependence, while LSA makes no such considerations. 

On the other hand, in Figure \ref{fig:regret_comparison}(c) we see that the performance of both FTARL and LSA are identical in the case of \textbf{StocID} cost functions. Note that in this scenario, every element of the cost vector $g^t$ has identical value in expectation and so, for any element $x^t \in \bbDelta_n$ the expected value of the cost $\langle g^t,x^t \rangle$ remains constant. This in turn indicates that every algorithm that generates $x^t$'s from the set $\bbDelta_n$ will achieve identical performance. It also means that our FTARL algorithm that respects the process in (\ref{eq:constrained_action}) achieves regret indistinguishable from an algorithm that picks $x^t$ in an unconstrained fashion. In Figure \ref{fig:regret_comparison}(d), we illustrate a comparison between our proposed algorithm and the regret bound obtained in the no-history dependence scenario for \textbf{StocID} cost functions. While this plot hints that the growth in regret of our algorithm can be $\Omega(\sqrt{T})$, it also indicates that effect of history dependence may not always act to scale this regret by a factor of $\sqrt{H}$. More work is needed to ascertain the tightness of our regret bound in $H$.

\acks{This work was supported by the ARO MURI grant W911NF-20-0252
(76582 NSMUR).}

\bibliography{l4dc2024}

\end{document}